    \newcolumntype{L}{>{\raggedright\arraybackslash}X}
\newcommand{\no}{\noindent}
\newtheorem{prop}{Proposition}[section]
\title{\LARGE \bf
TERP: Reliable Planning in Uneven Outdoor Environments using Deep Reinforcement Learning
}
\author{Kasun Weerakoon$^{1}$, Adarsh Jagan Sathyamoorthy$^{1}$, Utsav Patel$^{2}$, and Dinesh Manocha$^{2}$
\thanks{This work was supported in part by ARO Grants W911NF1910069, W911NF2110026  and U.S. Army Grant No. W911NF2120076. We acknowledge the support of the Maryland Robotics Center.}
\thanks{$^{1}$ Authors are with Dept. of Electrical and Computer Engineering, University of Maryland, College Park, MD, USA. {\tt\footnotesize kasunw@umd.edu, asathyam@umd.edu}}
\thanks{$^{2}$ Authors are with Dept. of Computer Science, University of Maryland, College Park, MD, USA. {\tt\footnotesize upatel22@umd.edu, dm@cs.umd.edu}}}
\begin{document}

\maketitle
\thispagestyle{empty}
\pagestyle{empty}

\begin{abstract}
We present a novel method for reliable robot navigation in uneven outdoor terrains. Our approach employs a fully-trained Deep Reinforcement Learning (DRL) network that uses elevation maps of the environment, robot pose, and goal as inputs to compute an attention mask of the environment. The attention mask is used to identify reduced stability regions in the elevation map and is computed using channel and spatial attention modules and a novel reward function. We continuously compute and update a navigation cost-map that encodes the elevation information or the level-of-flatness of the terrain using the attention mask. We then generate locally least-cost waypoints on the cost-map and compute the final dynamically feasible trajectory using another DRL-based method. Our approach guarantees safe, locally least-cost paths and dynamically feasible robot velocities in uneven terrains. We observe an increase of 35.18\% in terms of success rate and, a decrease of 26.14\% in the cumulative elevation gradient of the robot's trajectory compared to prior navigation methods in high-elevation regions. We evaluate our method on a Husky robot in real-world uneven terrains ($\sim 4m$ of elevation gain) and demonstrate its benefits. 


\end{abstract}

\section{Introduction} \label{sec:intro}
Autonomous mobile robots have increasingly been used for many real-world field applications such as indoor and outdoor surveillance, search and rescue, planetary/space exploration, large agricultural surveys, etc. Each of these applications requires the robot to operate in different kinds of terrains, which can be characterized by visual features such as color and texture, and geometric features such as elevation changes, slope, etc.

A robot's stability, which includes its pitch and roll angles being within certain limits, is predominantly dictated by the elevation changes/unevenness and slope of the terrain \cite{Fankhauser2018ProbabilisticTerrainMapping}. For reliable navigation, robots need to identify unsafe elevation changes and plan trajectories along planar regions to a large extent. However, sensing and navigating in uneven unstructured environments can be challenging because we do not have a complete model of the terrain with all the elevation information \cite{Silver-2010-10544,siva2021robot}. Rather this information is gathered using camera or LiDAR sensors as the robot navigates. In addition, elevation changes cannot be adequately inferred only from visual features of the environment \cite{josef2020deep}. Prior works have addressed this by using grid-based data structures such as Octomaps \cite{octomap} and elevation maps \cite{Fankhauser2014RobotCentricElevationMapping}, which are 2D grids that contain the maximum elevation (in meters) at each grid.    

Many other techniques have been proposed in the outdoor domain including semantic segmentation-based perception methods \cite{ganav,ss1-gonet,ss2-offseg,ss3-offroadtranseg}, and Deep Reinforcement Learning-based navigation methods \cite{oliveira2021three,guastella2021learning}. Semantic segmentation methods in this domain typically classify terrains in images based on whether they are traversable for a robot \cite{ganav,ss1-gonet,ss2-offseg} by learning the visual features corresponding to the color and texture of different objects. However, they are trained using human annotations which may not be suitable for all types of robots with different dynamic constraints. Therefore, a terrain that is traversable for a robot could be misclassified as non-traversable. As a result, navigation methods that use these segmentation methods for perception could be overly conservative for robot navigation and may result in sub-optimal trajectories \cite{badgr}.

\begin{figure}[t]
      \centering
      \includegraphics[width=7.15cm,height=6cm]{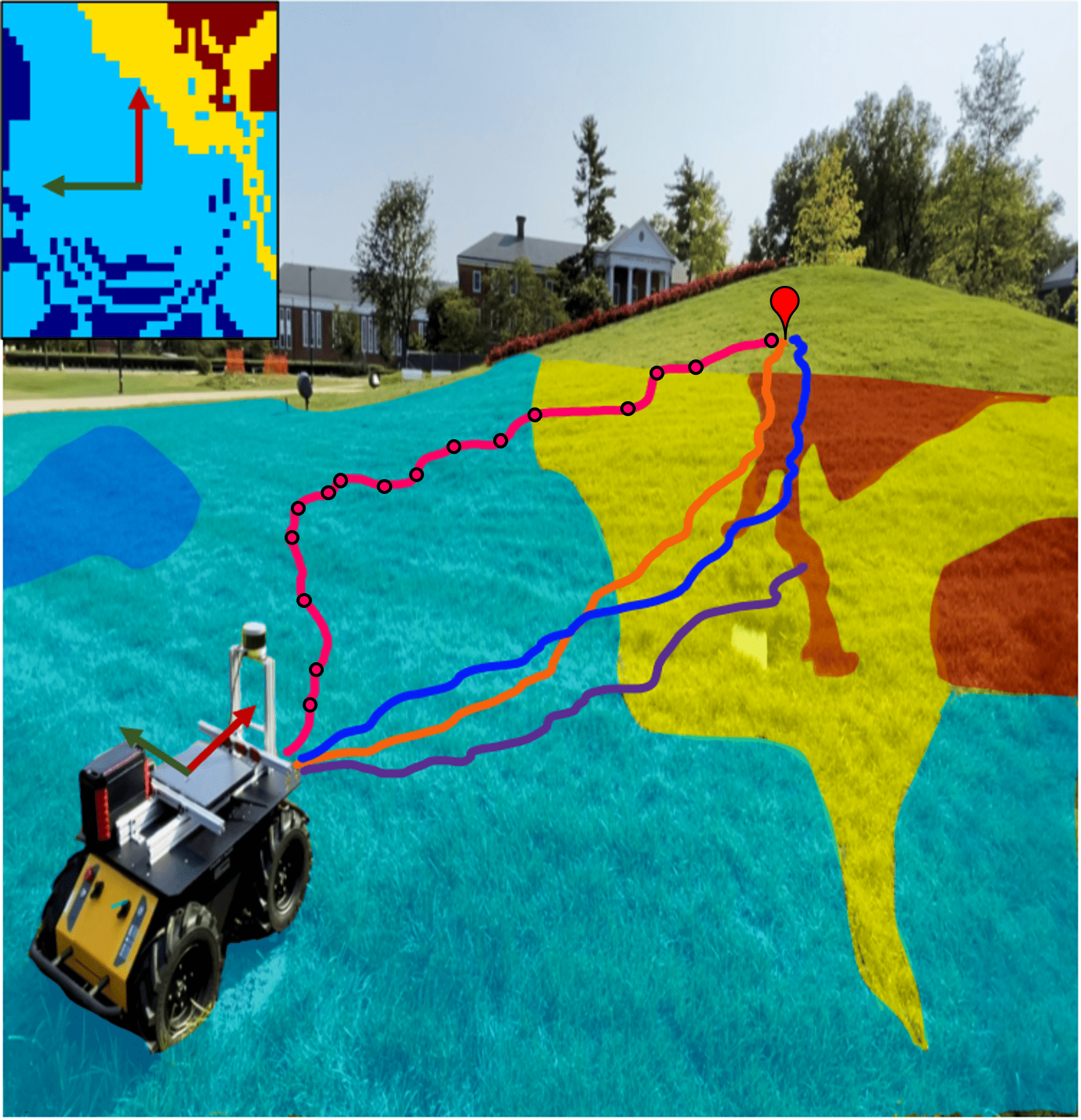}
      \caption {\small{Robot trajectories while navigating in an uneven terrain (elevation gain $\ge 3m$) using four different methods: our method TERP (pink), Ego-graph (orange), DWA (Blue) and velocities from the end-to-end Attn-DRL method (violet). TERP computes an attention mask to highlight reduced-stability regions for navigation, which are then used to compute and update navigation cost-maps continuously (e.g., cost-map computed at the robot's start location is shown in the top left) , with the unsafe regions where the robot’s pitch and roll angle could exceed the stability limit are highlighted in yellow and red (superimposed on the terrain). TERP then generates waypoints (pink points) that are dynamically feasible and reachable by locally least-cost paths. Other methods navigate through unsafe regions with high elevation gradients which could lead to unstable robot orientations. TERP leads to trajectories with low elevation gradients (26.14\% lower than prior navigation methods), leading to higher rate of reaching the goal (35.18\% increase).   
      }}
      \label{fig:cover-image}
      \vspace{-15pt}
\end{figure}

DRL-based navigation approaches \cite{Hu_sim2_real,ganganath2015constraint}, have been used for robot navigation in crowded scenes with dynamic obstacles. However, they are mostly limited to navigation on flat surfaces and do not account for varying elevation in outdoor terrains. Moreover, these DRL methods are typically trained using simulations and may face performance degradation in real-world environments \cite{Hu_sim2_real}. Such methods also cannot provide any optimality guarantees on the robot's trajectories. 

\textbf{Main contributions:} We present TERP (Terrain Elevation-based Reliable Path Planning), a novel method for navigating a robot in a reliable and stable manner in uneven outdoor terrains. That is, our approach identifies unsafe regions (with high elevation gradients) and computes least-cost waypoints to avoid those regions. Our overall approach uses a continuous stream of local elevation maps, robot pose, and goal-related vectors as inputs. The novel components of our approach include:


\begin{itemize}

\item A DRL network (Attn-DRL) that uses a Convolutional Block Attention Module (CBAM) \cite{CBAM} and a reward function to learn appropriate terrain features in the local elevation map that indicate \textit{reduced-stability regions} in the environment. The Attn-DRL network is primarily used for perception. After training, for any given input frame, we extract an attention mask from an intermediate feature vector in the Attn-DRL network. Our attention mask is not biased by human annotations and leads to an increase of 31.9\% in the success rate for reaching the goal in high-elevation environments compared to not using attention, and a decrease of up to 75\% in cumulative elevation gradient of the robot's trajectory.
    
\item A 2D navigation cost-map computed using the extracted attention mask and a normalized elevation map of the environment. Our cost-map encodes a certain region's degree of planarity i.e., a measure of whether the robot's pitch and roll angles will be within safe limits, without the risk of flipping-over. Using the attention-based navigation cost-map leads to a 35.18\% higher rate of reaching the goal in high elevation terrains when compared to prior methods.

\item A method to compute waypoints on the cost-map that are dynamically feasible (i.e., satisfying the kinodynamic constraints of the robot) and reachable by locally least-cost trajectories. These trajectories are used as inputs to DWA-RL \cite{dwa-rl}, a navigation scheme that computes dynamically feasible collision-free robot velocities. Our formulation results in a decrease in the Cumulative Elevation Gradient (CEG) of 26.14\%. Decoupling perception to Attn-DRL network, and navigation to DWA-RL leads to improved overall performance, as compared to using one end-to-end network approach. (see Fig. \ref{fig:cover-image} and Fig. \ref{fig:Testing-scenarios}).  
\end{itemize}

Our overall formulation has significantly lower execution times (takes one-fourth the time) than a state-of-the-art segmentation method \cite{ganav} and, guarantees that the robot's trajectories are locally least-cost and dynamically feasible. We point the reader to \cite{terp_arxiv} for a detailed version of our work.
\section{Related Work}
In this section, we discuss existing work on identifying terrain features and robot navigation on uneven terrains.

\subsection{Identifying Terrain Features}
Early works that addressed terrain identification utilized classification and modeling techniques \cite{kim2006traversability,liu2005multi,govindaraju2005quick}. These include Gaussian mixture models~\cite{nardi2019actively}, Markov random fields ~\cite{wellington2006generative}, terrain mapping~\cite{bolles}, and 3D environment reconstruction~\cite{Keqiang} with perception data from laser range finders \cite{wurm2009improving} or from LiDARs, stereo cameras, and infrared cameras. 

Recent methods have also constructed 3D maps, identified dynamic obstacles for unstructured scene navigation \cite{dalin} and in walkways \cite{takashi}.
Due to the advent of deep neural networks (DNN), classification techniques have been used for terrain understanding \cite{bai2019three,Zurn,wang2021visual}, with some methods using robot-ground interaction data to train DNNs \cite{vulpi2021recurrent}. Most of the visual features extracted by DNNs and external geometric features can vary significantly from one environment to another. Hence, elevation estimation methods were introduced to generalize the terrain understanding problem.


Recent studies such as \cite{guastella2021learning} highlight the importance of terrain elevation data to perform robust outdoor navigation tasks. \cite{ss6-ttm} presents a method to perform terrain semantic segmentation and mapping using point clouds and RGB images for an autonomous excavator. In our work, we utilize an elevation map of the environment computed using point clouds from a 3D LiDAR. 

\subsection{Robot Navigation on Uneven Terrains}
Early works in uneven terrain navigation addressed it using binary classification (obstacle vs. free space) \cite{Laubach}, a continuous obstacle space \cite{pivtoraiko2009differentially}, or potential fields \cite{Shimoda_potential} for high-speed rough terrain navigation. 

Recent developments in deep learning have led to motion planners with high-dimensional input processing capabilities \cite{oliveira2021three}. For instance, \cite{silver2010learning,zakharov2020energy} have incorporated on-board sensory inputs and prior knowledge of the environment to estimate navigation cost functions or cost-maps. In \cite{valencia2013rough}, a nonlinear geometric cost function has been trained to imitate human expert's behavior in uneven terrains. Similarly, an energy-cost model was proposed in \cite{ganganath2015constraint} to generate energy-efficient robot paths on rough terrains.

Numerous end-to-end DRL methods have also been proposed for uneven terrain navigation \cite{guastella2021learning,zhang2018robot,Nguyen}. These methods have used perception inputs such as elevation maps, the robot’s orientation, and depth images to train an A3C-based network \cite{zhang2018robot} or have used RGB images and point clouds to train a multi-modal fusion network \cite{Nguyen}. In \cite{josef2020deep}, the effectiveness of zero to local-range sensing was compared using a rainbow DRL-based local planner. All the aforementioned DRL methods were trained and tested only on simulated uneven terrains. Hence, transferring such methods into a real robot while maintaining comparable navigation performance can be challenging \cite{Hu_sim2_real}. In our method, we decouple perception (to Attn-DRL network) and navigation to two separate modules (see Fig.\ref{fig:System_Block_Diagram}). This ensures that our method's real-world performance is comparable to its simulation performance.

\section{Background}

\subsection{Notations, and Definitions}
We highlight the symbols and notation we use in Table \ref{tab:symbol_defn}. For our formulation, we consider a differential drive robot mounted with a 3D LiDAR with a range of $r_{sense}$, and a $360\degree$ field of view. We assume that the robot's start and goal locations are in stable regions i.e., the robot's roll and pitch angles are within a safe limit. 
We use 2D elevation maps ($\mathbf{E^t}$) (from processed point cloud data) obtained from a 3D LiDAR. $E^t(i, j)$ denotes elevation value at the $(i, j)$ grid coordinate (see Fig. \ref{fig:Costmap_computataion}). All 2D data structures in our work ($\mathbf{E^t, E^t_{N}, A^t, C^t}$) are regarded as $n \times n$ matrices or grids with the robot located at the center. We transform the indices $(i, j)$ such that the robot's position corresponds to $(i, j) = (0, 0)$. We obtain real-world positions $(x, y)$ from indices $(i, j)$ as 
\vspace{-5pt}
\begin{equation}
    \vspace{-10pt}
    (x, y) = res * (i, j).
    \label{eqn:grid-world-conversion}
\end{equation}

\begin{table}[]
\centering
\begin{tabularx}{\linewidth}{|c|L|} 
\hline
\textbf{Symbols} & \textbf{Definitions}  \\
\hline
$\mathbf{E^t}$ & An $n \times n$ 2D local elevation map of the sensed environment at time instant t \\
\hline
\textit{c} & Ground clearance of the robot \\ 
\hline
$\mathbf{E^t_N}$ & Elevation map normalized based on ground clearance at time instant t\\ 
\hline
$\mathbf{A^t}$ & Attention mask obtained from an intermediate feature vector at time instant t\\
\hline
$\mathbf{C^t}$ & Navigation cost-map at time instant t\\
\hline
res & Resolution to convert index locations to real-world locations \\
\hline
$r_{sense}$ & Radius (in meters) of the robot's sensing region \\ 
\hline
$e_{max}$, $e_{min}$ & Max and min elevation values in $E^t(x, y)$ (in meters) \\  
\hline
$d_{goal}$ & Distance between the robot and its goal \\
\hline
$\alpha_{goal}$ & Angle between the robot's heading direction and the direction to the goal \\
\hline
$\alpha_{relative}$ & Angle between the robot's current location and the goal w.r.t. the robot's start location \\
\hline
$\textbf{g}$ & Goal location relative to the robot\\ 
\hline
$\phi$, $\theta$ & Roll and pitch angles of the robot \\ 
\hline

\end{tabularx}
\caption{\small{List of symbols used in our approach.}}
\label{tab:symbol_defn}
\vspace{-15pt}
\end{table}

\subsection{Deep Reinforcement Learning} \label{sec:DRL-background}
Our Attn-DRL network is based on the Deep Deterministic Policy Gradient (DDPG) algorithm \cite{ddpg} combined with Convolutional Neural Network layers and CBAM \cite{CBAM} (see Section \ref{sec:cbam}). We choose DDPG because it uses an actor-critic network, which leads to stable convergence to the optimal policy compared to policy-based or value-based methods. We briefly describe the input and action spaces of our DRL network here and our network architecture and reward function in Section \ref{sec:our-method}. The perception input consists of a 2D normalized elevation map $\mathbf{E^t_{N}}$ (see Equation \ref{eqn:normalized-E}). The robot pose and goal related inputs include $d_{goal}$, $\alpha_{goal}$, $\alpha_{relative}$, $|\phi|$, $|\theta|$, and the elevation gradient vector $\mathbf{h}$ in the robot's heading direction, defined as follows: 

\vspace{-5pt}
\begin{equation}
    \mathbf{G} = ||\nabla \mathbf{E^t}||_2 , \quad \mathbf{h} = G(n/2:1, 0).
\label{eqn:gradient-vector}
\end{equation}

\no where $\nabla$ denotes the gradient operation resulting in an $n \times n$ $\mathbf{G}$ matrix and $G(n/2:1, 0)$ denotes all the values in rows $n/2$ to 1 in the $0^{th}$ column (according to our indexing conventions). Our Attn-DRL network is end-to-end with its action space being a linear and angular robot velocity pair. 

Although DRL methods are useful for training certain robot behaviors in simulations, their performance degrades when transferred to real-world scenarios. In addition, the velocities computed by the end-to-end DRL network cannot guarantee dynamic feasibility or optimality (in terms of any metric). Therefore, we choose not to use the output velocities from Attn-DRL, and instead decouple perception and navigation. We use Attn-DRL purely for perception by extracting an attention mask from an intermediate feature vector $F_{ref}$ in the network (see Section \ref{sec:network-arch} and Fig. \ref{fig:Network_architecture}), and then computing a navigation cost-map. Computing waypoints and robot velocities for navigation is performed in a separate module (see Fig. \ref{fig:System_Block_Diagram}).

\subsection{Convolutional Block Attention Module} \label{sec:cbam}

To highlight \textit{reduced-stability regions} in the environment's elevation map, we use CBAM, a light-weight attention module that can be integrated with any CNN architecture \cite{CBAM}. Given a feature vector (say an output of a CNN) with a certain number of channels, CBAM sequentially applies attention modules along the channels and then along the spatial axis (different parts on the feature vector) to obtain a refined feature vector (see Fig. \ref{fig:Network_architecture}). This leads to our Attn-DRL network learning the features in the elevation map that are relevant for reliable navigation.


\section{TERP: Terrain Elevation-based Robot Path Planning} \label{sec:our-method}
We explain the three major stages of our method: 1. training the Attn-DRL network and extracting the attention mask, 2. computing a navigation cost-map and 3. generating locally least-cost trajectories. 

\begin{figure}[t]
      \centering
      \includegraphics[width=8cm,height=3.25cm]{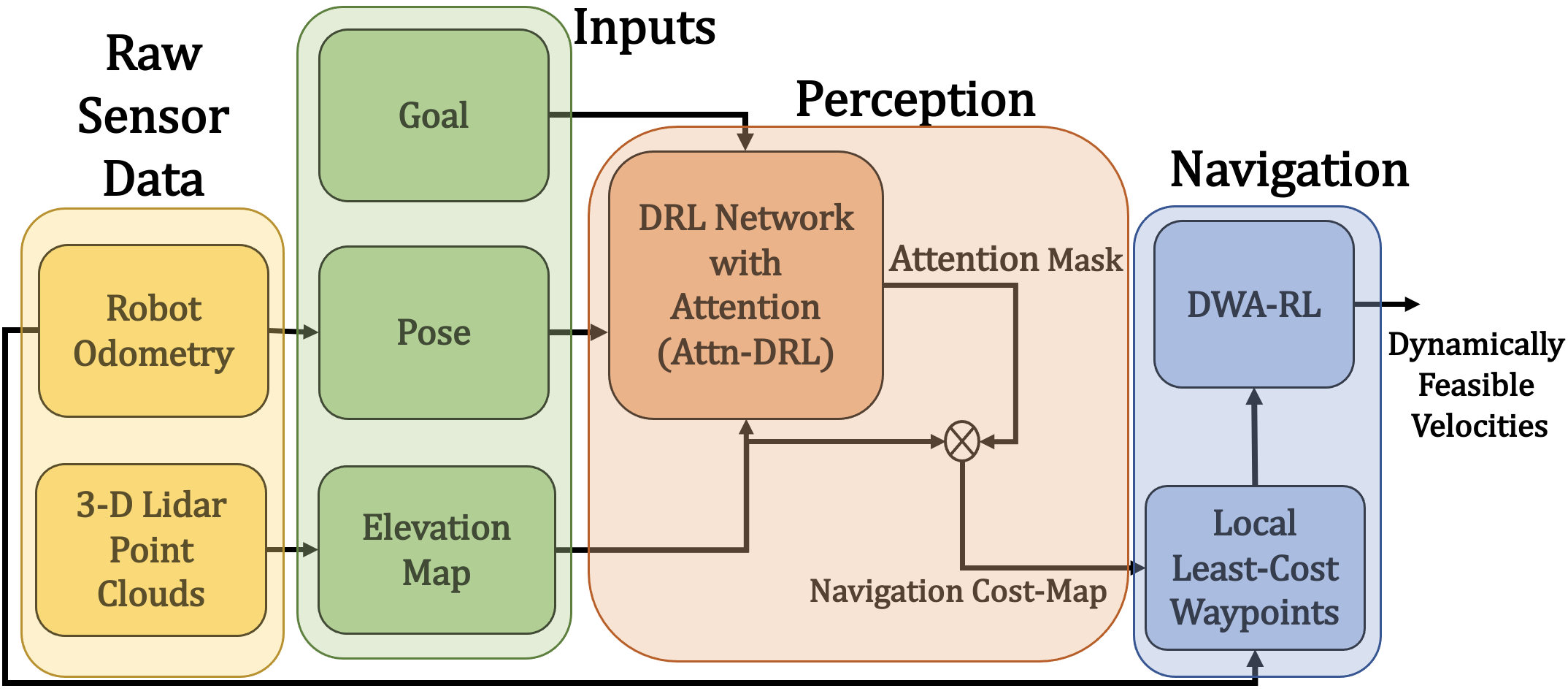}
      \caption {\small{TERP's overall system architecture. We extract an attention mask from the Attn-DRL network continuously for each frame of the input and compute the navigation cost-map. Our architecture decouples perception and navigation to different modules instead of using an end-to-end DRL network, resulting in improved performance. It also ensures that our method performs comparably in both simulated and real-world environments. The advantages of this formulation are highlighted in Section \ref{sec:analysis}.}}
      \label{fig:System_Block_Diagram}
      \vspace{-10pt}
\end{figure}

\subsection{Normalized Elevation Map}
We use existing software packages to process raw point cloud data from a 3D LiDAR and obtain an $n \times n$ local elevation map $\mathbf{E^t}$ around the robot. We employ nearest neighbor interpolation to interpolate values for missing points based on nearby points.

A robot with higher ground clearance \textit{c} can navigate through higher degrees of unevenness than a robot with lower ground clearance. We account for this difference by computing a normalized local elevation map as, $\mathbf{E^t_{N}} = $

\vspace{-10pt}
\begin{equation}
\begin{cases}
(E^t(i, j) - c) + 0.1(e_{max} - e_{min}) \,\, &\text{if} \,\, E^t(i, j) - c > 0, \\
(E^t(i, j) - c) \qquad &\text{Otherwise.}
\end{cases}
\label{eqn:normalized-E}
\vspace{-5pt}
\end{equation}

\subsection{Computing Attention Masks}
We use $\mathbf{E^t_N}$ and the other input vectors (Section \ref{sec:DRL-background}) to train the Attn-DRL network. We explain its novel network architecture and the reward function used for training in simulated environments.

\subsubsection{Network Architecture} \label{sec:network-arch}
Our network architecture (Fig. \ref{fig:Network_architecture}) has two branches. On the first branch, we use a 2D convolutional (conv-) layer of dimensions $40 \times 40 \times 8$ to process the input normalized elevation map (with $n = 40$). This outputs a feature vector with eight channels, which are passed on to the channel and spatial attention modules sequentially. Then, the channels are passed through another conv-layer. The channel and spatial attention modules weigh the features in the different channels and locations in each channel to identify \textit{reduced-stability regions} relevant for reliable navigation. The resultant output is a refined feature vector ($F_{ref}$) with dimensions $40 \times 40 \times 8$, which is finally passed through a dimension reduction conv-layer.

On the second branch, we concatenate the other inputs ($d_{goal}$, $\alpha_{goal}$, $\alpha_{relative}$, $|\phi|$, $|\theta|$, and $\mathbf{h}$ vector) to obtain a 1D vector of size $n/2 + 5$ (25 in this case). This is processed using a fully connected layers of dimensions shown in Fig. \ref{fig:Network_architecture}. The outputs from the two branches are concatenated through several fully connected layers to finally obtain the robot's linear and angular velocities (which are used only for training and comparison). We use $ReLU$ activation in the hidden layers and $tanh$ activation at the output layer. 

\begin{figure}[t]
      \centering
      \includegraphics[width=\columnwidth,height=3.3cm]{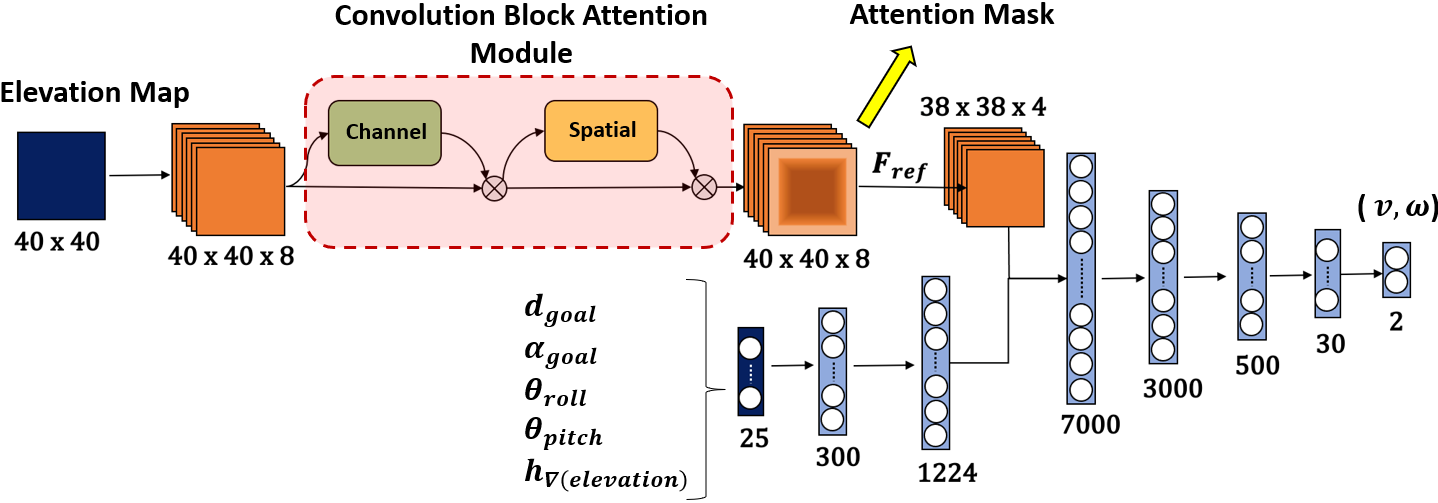}
      \caption {\small{The network architecture of our Attn-DRL network. The dark blue layers represent our normalized elevation map and the vector containing the robot's pose, goal, and elevation gradient-related vectors. The orange layers represent convolutional layers, and the light blue layers are fully-connected layers. The green and yellow blocks are the channel and spatial attention modules, respectively. The dimensions of each layer are mentioned next to it. Our network run in real-time with an execution time of $\sim 0.08$ seconds.}}
      \label{fig:Network_architecture}
      \vspace{-15pt}
\end{figure}

\subsubsection{Reward Functions}
The reward function is used to shape the velocities of the end-to-end attn-DRL network during training, i.e., to reward desirable and penalize undesirable robot actions/velocities. This in turn trains the intermediate feature vectors (such as $F_{ref}$) in the network to learn features that are relevant for reliable navigation (leading to high rewards). The total reward collected by the robot for performing an action at any instant is given as,
\begin{equation}
    R_{tot} = \beta_1 R_{dist} + \beta_2 R_{head} + \beta_3 R_{stable} + \beta_4 R_{grad}.
    \vspace{-5pt}
\end{equation}

\no Here, the different $\beta$ are the weighing factors for the different reward components. $R_{dist}$ and $R_{head}$, are the penalties for the robot moving away from its goal. They are defined as,
\begin{equation}
    \begin{split}
        R_{dist} = - d_{goal}, \quad R_{head} = - |\alpha_{goal}|.
    \end{split}
    \vspace{-10pt}
\end{equation}

\no The novel terms of our reward function are $R_{stable}$ and $R_{grad}$ which encourage the network to learn features to maintain the robot's stability and avoid regions with high elevation gradients. $R_{stable}$ is the reward for the robot maintaining stability (having low roll and pitch angles), and $R_{grad}$ is the penalty for heading in a direction with a high elevation gradient. They are defined as,
\begin{equation}
    R_{stable} = \cos^2{\phi} + \cos^2{\theta}, \quad R_{grad} = - \mathbf{w} \cdot \mathbf{h},
    \vspace{-5pt}
\end{equation}


\no where $\mathbf{w}$ is a weight vector with weights in ascending order, and $\mathbf{h}$ is the heading gradient vector defined in Equation \ref{eqn:gradient-vector}. The $\mathbf{w}$ vector weighs elevation changes that are closer to the robot higher than the ones farther away. 

\subsubsection{Attention Mask Extraction}
As mentioned in Section \ref{sec:DRL-background}, we extract an attention mask $\mathbf{A}$ from the refined feature vector $F_{ref}$. Since \textbf{$F_{ref}$}'s different channels are already weighted by the channel and spatial attention modules, the attention mask can be obtained by an unweighted summation along the channels. This is,

\vspace{-10pt}
\begin{equation}
    \mathbf{A}^t(i, j) = \sum_{i=1}^{8} F_{ref}(x, y, i).
\end{equation}

The attention mask weighs elevation changes in the direction the robot is moving or turning higher than all other directions. This is depicted in Fig.\ref{fig:Costmap_computataion}(b), where the attention mask corresponds to the normalized elevation map presented in Fig.\ref{fig:Costmap_computataion}(a). The robot turns to its right to head towards its goal, and the high elevation changes on the right are highlighted in the attention mask.

\subsection{Computing Navigation Cost-map}
Since the normalized elevation map and the attention mask represent the environment's elevation values and the weights for the different regions at a time instant $t$, we utilize them to compute a navigation cost-map as,
\vspace{-5pt}
\begin{equation}
    \mathbf{C^t} = \mathbf{E^t_{N}} \odot \mathbf{A^t}.
    \vspace{-5pt}
\end{equation}

\no Here $\odot$ represents element-wise multiplication. The computed costmap is shown in Fig.\ref{fig:Costmap_computataion}(c). To completely avoid certain high-cost regions while navigating (costs higher than a certain threshold $C_{max}$), we perform the following operation.
\vspace{-5pt}
\begin{equation}
    C^t(i, j) = \infty \qquad if \qquad C^t(i, j) > C_{max}. 
    \vspace{-5pt}
\end{equation}

\no $C_{max}$ is set not only based on the elevation in a region, but also on the robot's maximum torque capabilities. Therefore, all regions not reachable by the robot are assigned infinite cost.

\begin{figure*}[t]
      \centering
      \includegraphics[width=16cm,height=3.75cm]{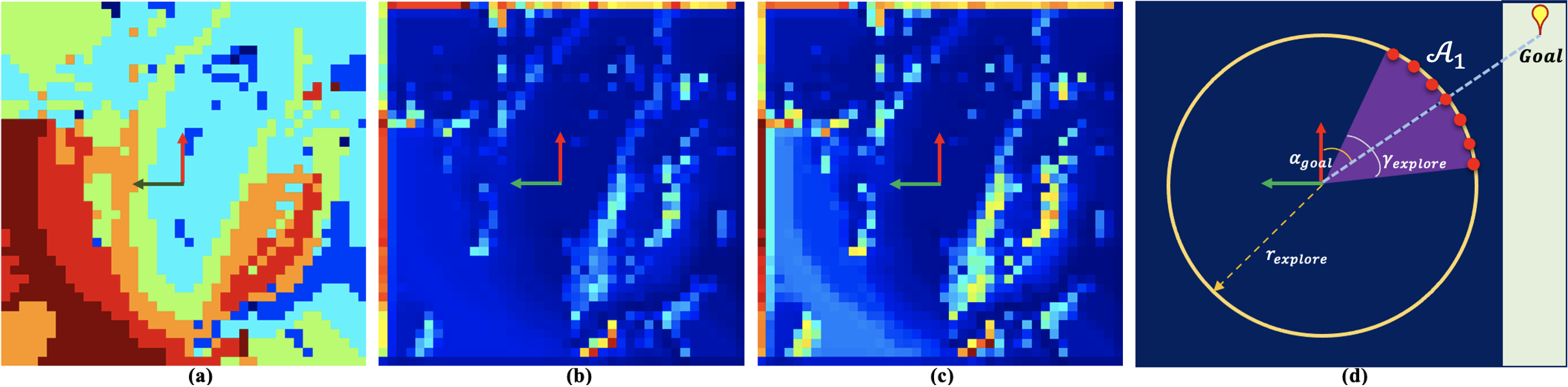}
      \caption {\small{(a) The normalized input elevation map $\mathbf{E^t_N}$. The colors range from light shades of blue (low elevation) to dark shades of red (high elevation) indicating elevation relative to the robot; (b) Attention mask extracted from Attn-DRL network. The light blue regions highlight the \textit{reduced-stability regions} in the direction of the goal (see (d)). The attention mask \textbf{$\mathbf{A^t}$} does not focus on high elevation regions in the left bottom since they are not relevant to the robot's current goal direction (top right); (c) Cost-map computed from $\mathbf{E^t_N}$ and $\mathbf{A^t}$. The lighter regions have high cost.; (d) Least-cost waypoint computation with the other parameters represented. The red dots represent the candidate waypoints on the arc $\mathcal{A}_1$}.}
      \label{fig:Costmap_computataion}
      \vspace{-15pt}
\end{figure*}

\subsection{Computing Least-cost Waypoints}
Consider a local cost-map $\mathbf{C^t}$ and a goal location $\textbf{g}$ that lies outside it. Choosing least-cost waypoints (some index (i, j)) inside $C^t(i, j)$ that lead the robot towards its goal is non-trivial. This is because the robot operates without any global knowledge of the environment. In addition, the chosen waypoint must be safely reachable by the robot. To address this, we formulate the following method.

Given a robot with its goal at $\textbf{g}$, we consider an approximate circle $\mathcal{C}_{explore}$ in $\mathbf{C^t}$ centered at the robot with a radius $r_{explore} < r_{sense}$.  Next, we consider an arc $\mathcal{A}_1^t$ on the circumference of $\mathcal{C}_{explore}$ that makes an angle of $\gamma_{explore}$ at the center with the goal vector $\textbf{g}$ bisecting it (see Fig.\ref{fig:Costmap_computataion}(d)). We use all locations $(i, j) \in \mathcal{A}_1^t$ as candidate waypoints. We then choose the set of \textit{least-cost} waypoints as,
\begin{equation}
    L^t_{min} = \{\underset{(i, j) \in \mathcal{A}_1^t}{\operatorname{argmin}} \left(cost((i, j))\right)\}.
    \vspace{-5pt}
\end{equation}

\no Here, for $cost((i, j))$ is the cumulative cost of navigating to $(i,j)$ from the robot's location $(0, 0)$ in the cost-map by trajectories computed by Dijkstra's algorithm \cite{dijkstra}. If multiple index locations exist in $L^t_{min}$, we choose the waypoint by transforming it to real-world coordinates (see Equation \ref{eqn:grid-world-conversion}) as,
\begin{equation}
    x^*, y^* = \underset{x, y \in (res \cdot L^t_{min})}{\operatorname{argmin}} \left(dist((x, y), \textbf{g})\right).
    \label{eqn:least-cost-waypoints}
    \vspace{-5pt}
\end{equation}

\no In the highly unlikely event that $L^t_{min} = \emptyset$ (implying all trajectories have infinite cost), we expand the arc $\mathcal{A}^t_1$ to $\mathcal{A}^t_2$, which makes an angle of $2\cdot\gamma_{explore}$ at $\mathcal{C}_{explore}$'s center. We then consider $(i, j) \in \mathcal{A}^t_2 \setminus \mathcal{A}^t_1$ as candidate waypoints and repeat the procedure. 

When the terrain is planar, the robot can safely have a larger exploration circle and choose waypoints on its circumference. This significantly reduces the number of waypoints that need to be computed before reaching the goal. Therefore, $r_{explore}$ is formulated as a function of the costs within the circle as, 
\vspace{-5pt}
\begin{equation}
    r_{explore} = k_1 + k_2\cdot\frac{1}{mean(C^t(i, j))}; \quad \forall i, j \in \mathcal{C}_{explore}.
\end{equation}

\no where $k_1$ and $k_2$ are constants. If the costs within $\mathcal{C}_{explore}$ are low (implying a planar surface), the formulation expands the radius.

\begin{prop}
Our method always navigates the robot in a dynamically feasible trajectory with a low cumulative elevation gradient locally.
\end{prop}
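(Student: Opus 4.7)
The plan is to split the claim into its two components—dynamic feasibility and low local cumulative elevation gradient—and argue each one by chasing the guarantees through the three stages of the pipeline (cost-map construction, waypoint selection, velocity computation). I would first fix notation: let $\tau^t$ denote the trajectory segment executed between successive waypoint updates at time $t$, and let $\text{CEG}(\tau^t) = \sum_{(i,j)\in\tau^t} \|\nabla E^t(i,j)\|_2$ be the cumulative elevation gradient along that segment. The proposition is really a local statement, so the proof should explicitly restrict attention to the current exploration disk $\mathcal{C}_{explore}$ and the candidate arc $\mathcal{A}_1^t$ (or its expansion $\mathcal{A}_2^t$).

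For dynamic feasibility, I would argue in two steps. First, the cost-map assigns $C^t(i,j)=\infty$ whenever the elevation in a cell exceeds what the robot can traverse given its torque and ground clearance $c$, so any finite-cost Dijkstra path through $C^t$ only transits cells that are individually reachable. Second, the chosen waypoint $(x^*,y^*)$ from Equation~\ref{eqn:least-cost-waypoints} is handed to DWA-RL, which by construction samples only from the admissible velocity window satisfying the robot's kinodynamic constraints and returns a collision-free $(v,\omega)$ pair. Composing these two facts gives dynamic feasibility of the executed trajectory.

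For the low cumulative elevation gradient claim, I would use the optimality of Dijkstra's algorithm on $C^t$. Because $C^t(i,j) = E^t_N(i,j)\odot A^t(i,j)$ and the attention mask $A^t$ is non-negative and emphasizes cells along the goal direction, the accumulated cost along a Dijkstra path is an upper bound (up to the bounded attention weights) on a weighted cumulative elevation gradient along that path. The waypoint selected in $L^t_{min}$ therefore minimizes this weighted sum among all candidate arcs in $\mathcal{A}_1^t$, and tie-breaking via Equation~\ref{eqn:least-cost-waypoints} preserves this minimality while biasing toward the goal. Summing the per-step guarantee over the $t$-indexed replans yields a local CEG that is no larger than that of any alternative waypoint on $\mathcal{A}_1^t$, which is the precise sense of ``low ... locally''.

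The main obstacle, I expect, is making the phrase ``low cumulative elevation gradient'' mathematically rigorous rather than merely heuristic: the quantity Dijkstra minimizes is the attention-weighted elevation cost, not the raw gradient norm, so I would need a short lemma bounding $\text{CEG}(\tau^t)$ by the Dijkstra cost divided by $\min_{(i,j)} A^t(i,j)$ on the support of the chosen path, and noting that $A^t$ is bounded below by a positive constant on cells the network flags as relevant. A secondary subtlety is the fallback case $L^t_{min}=\emptyset$: I would handle it by noting that the arc expansion to $\mathcal{A}^t_2$ preserves the same argmin structure, so optimality on the enlarged candidate set still holds, and the DWA-RL layer continues to guarantee feasibility regardless of which arc was used.
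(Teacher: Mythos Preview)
Your decomposition and line of reasoning match the paper's, but you are considerably more careful than the paper itself. The published proof is a two-sentence sketch: it asserts that (i) the waypoints from Equation~\ref{eqn:least-cost-waypoints} are reachable with least trajectory cost locally, and (ii) the infinite-cost cells in $C^t$ guarantee avoidance of unsafe or torque-infeasible regions. That is the entire argument; the paper does not invoke DWA-RL inside the proof (kinodynamic feasibility is deferred to the following subsection), does not distinguish attention-weighted cost from raw elevation gradient, and does not treat the $L^t_{min}=\emptyset$ fallback.

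So your proposal is not wrong---it is a strict elaboration of the paper's reasoning. The extra machinery you introduce (the lower bound on $A^t$ to relate Dijkstra cost to $\text{CEG}(\tau^t)$, the explicit appeal to DWA-RL's admissible-velocity sampling, the arc-expansion case) are all things the paper glosses over or places outside the proof. If you want to match the paper's level, the two bullet points above suffice; if you want a proof that actually earns the word ``dynamically feasible'' in the statement, your plan of pulling in the DWA-RL guarantee is the right move, and your bridging lemma is exactly the step the paper omits when equating ``least cost'' with ``low cumulative elevation gradient.''
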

\begin{proof}
This result follows from the fact that the waypoints computed using Equation \ref{eqn:least-cost-waypoints} are reachable with the least trajectory cost locally. In addition, since our cost-map has infinite costs for regions that are unsafe and unreachable based on the robot's torque requirements, such regions are guaranteed to be avoided while computing trajectories.
\end{proof}

\subsection{Dynamic Feasibility}
We use DWA-RL \cite{dwa-rl} to compute robot velocities to follow the least-cost trajectory computed by Dijkshtra's algorithm. DWA-RL guarantees that the velocities are collision-free and obey the robot's kinodynamic constraints. Therefore, the robot can also avoid collisions with dynamic obstacles in the environment. Fig. \ref{fig:System_Block_Diagram} shows how the components in our method are connected. Our approach decouples perception to our DRL network with the attention modules and the navigation to our local waypoint computation and DWA-RL. This architecture leads to superior performance and guarantees least-cost waypoints and the dynamic feasibility of the final velocities.  
\section{Results and Analysis}
We explain our method's implementation in simulations and on a real-world robot. We then explain our evaluation metrics and discuss our inferences. 

\subsection{Implementation}
Our Attn-DRL network is implemented in Pytorch. We use simulated uneven terrains with a Clearpath Husky robot created using ROS Melodic and Unity game engine to train the Attn-DRL network. The simulated Husky robot is mounted with a Velodyne VLP16 3D LiDAR. The network is trained in a workstation with an Intel Xeon 3.6 GHz processor and an Nvidia Titan GPU. 

We replicate the simulated robot's setup on a real Husky robot. In addition to the LiDAR, the robot is mounted with a laptop with an Intel i9 CPU and an Nvidia RTX 2080 GPU. We use the Octomap \cite{octomap} and Grid-map \cite{GridMapLibrary} ROS packages to obtain the elevation map $\mathbf{E^t}$ of size $40 \times 40$ using the point clouds from the Velodyne ROS package. Our network is computationally light enough to run real-time on the laptop along with the aforementioned ROS packages.    

\subsection{Evaluation Metrics}
We use the following metrics to compare our method's navigation performance with the Dynamic Window Approach (DWA) \cite{DWA}, the ego-graph \cite{ego_graph} method used in \cite{josef2020deep}, and our end-to-end Attn-DRL network (without decoupling perception and navigation). The ego-graph method uses a cost function based on $\alpha_{goal}$ and the terrain's cumulative elevation gradient along a set of local trajectories to find the minimum cost trajectory. We add a distance-to-goal cost to this cost function and use it for additional comparison (ego-graph+). 

\no \textbf{Success Rate} - The number of times the robot reached its goal while avoiding \textit{reduced-stability regions} and collisions over the total number of attempts.

\no \textbf{Cumulative Elevation Gradient (CEG)} - The summation of the elevation gradients experienced by the robot along a trajectory (i.e., $||\nabla z_{rob}||$, where $\nabla z_{rob}$ represents the robot's elevation at any time instant).

\no \textbf{Normalized Trajectory length} - The robot's trajectory length normalized using the straight-line distance to the goal for both successful and unsuccessful trajectories. 

\no \textbf{Goal Heading Deviation} - The cumulative angle difference between the robot's heading and the goal along a trajectory (i.e., $\sum_{i = 1}^{t_{goal}} \alpha^i_{goal}$, where $t_{goal}$ is the time to reach the goal).   
    

\subsection{Testing Scenarios}
We evaluate our elevation based navigation framework and compare it with prior methods in four scenarios(see Fig.\ref{fig:Testing-scenarios}). 

\begin{itemize}
\item \textbf{Low-Elevation} - Maximum elevation gain $\le 1m$.

\item \textbf{Medium-Elevation} - Elevation gains between $1-2m$.

\item \textbf{High-Elevation} - Maximum elevation gain $\ge 3m$.

\item \textbf{City-curb} - A city environment with curbs higher than the robot's ground clearance that it must avoid.
\end{itemize}

\begin{figure*}[t]
      \centering
      \includegraphics[width=\textwidth,height=7.2cm]{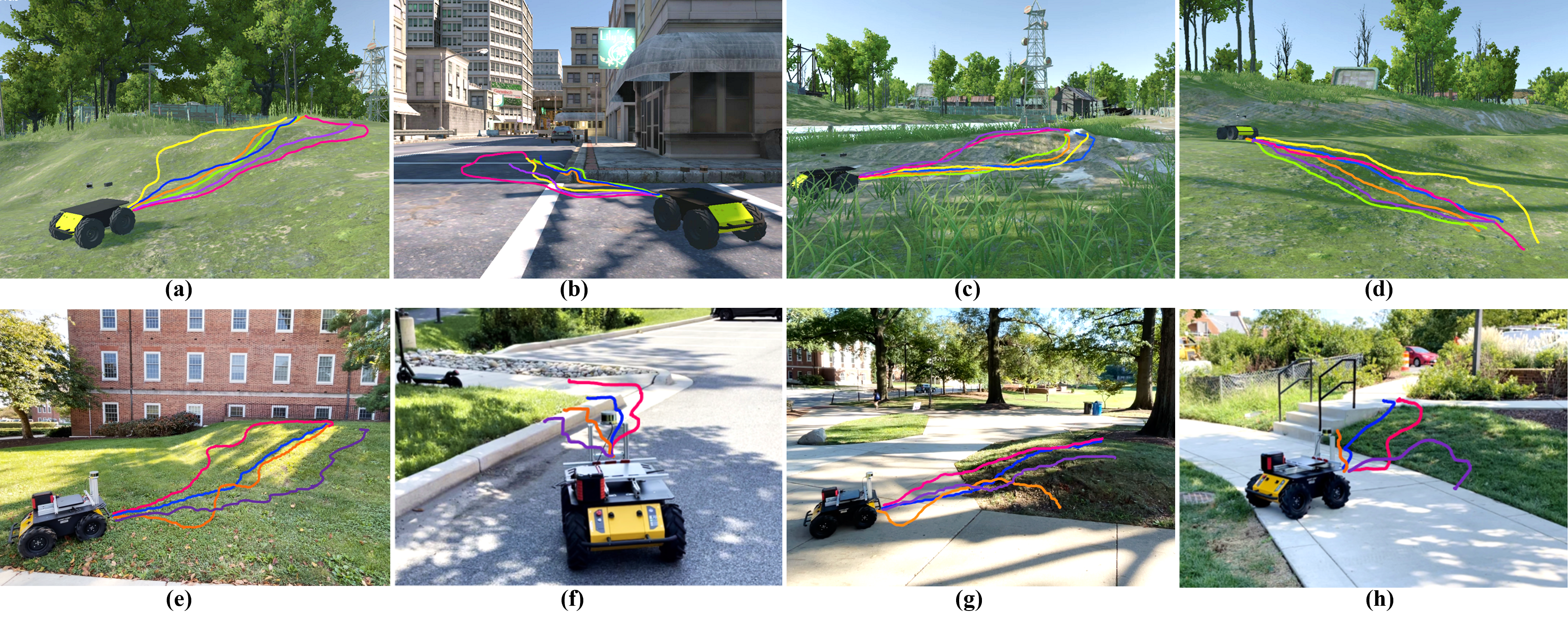}
      \caption {\small{Robot trajectories when navigating in different simulated and real-world uneven terrains using TERP (pink), TERP w/o attention (yellow), end-to-end Attn-DRL network (violet), ego-graph (orange), Ego-graph+ (green) and DWA (blue).  
 (a) High-elevation; (b) City-curb;(c) Low-elevation;(d) Medium-elevation; (e) real-world medium-elevation; (f) real-world curb (g) real-world medium-elevation; (h) real-world medium-elevation with obstacle regions;. For the real-world scenarios, we show four trajectories corresponding to the aforementioned colors. We observe that TERP computes trajectories with low elevation gradients in uneven terrains and is capable of handling challenging curb scenarios. }}
\label{fig:Testing-scenarios}
\vspace{-10pt}
\end{figure*}


\begin{table}
\resizebox{\columnwidth}{!}{%
\begin{tabular}{ |c |c |c |c |c |c |} 
\hline
\textbf{Metrics} & \textbf{Method} & \multicolumn{1}{|p{1cm}|}{\centering \textbf{Low} \\ \textbf{Elevation}} & \multicolumn{1}{|p{1cm}|}{\centering \textbf{Medium} \\ \textbf{Elevation}} & \multicolumn{1}{|p{1cm}|}{\centering \textbf{High} \\ \textbf{Elevation}} & \multicolumn{1}{|p{1cm}|}{\centering \textbf{City} \\ \textbf{Curb}}\\ [0.5ex] 
\hline
\multirow{6}{*}{\rotatebox[origin=c]{0}{\makecell{\textbf{Success}\\\textbf{Rate (\%)}}}} 
 & DWA \cite{DWA} & 71 & 83 & 61 & 39   \\
 & Ego-graph \cite{ego_graph} & 69 & 79 & 62 & 29 \\
 & Ego-graph+ & 67 & 81 & 54 & 31 \\
 & End-to-end Attn-DRL & 70 & 79 & 60 & 45 \\
 & TERP without Attention & 76 & 83 & 67 & 47 \\
 & TERP with Attention & \textbf{82} & \textbf{85} & \textbf{73} & \textbf{62} \\
\hline

\multirow{6}{*}{\rotatebox[origin=c]{0}{\makecell{\textbf{CEG}}}}  
 & DWA \cite{DWA}  & 0.68 & 1.34 & 2.49 & 0.298   \\
 & Ego-graph \cite{ego_graph} & 0.73 & 1.25 & 2.83 & 0.407 \\
 & Ego-graph+ & 0.72 & 1.23 & 2.64 & 0.380 \\
 & End-to-end Attn-DRL & 0.65 & 1.24 & 2.15 & 0.084 \\
 & TERP without Attention & 0.70 & 1.47 & 2.13 & 0.032 \\
 & TERP with Attention & \textbf{0.61} & \textbf{1.22} & \textbf{2.09} & \textbf{0.008} \\
\hline

\multirow{6}{*}{\rotatebox[origin=c]{0}{\makecell{\textbf{Norm.}\\\textbf{Traj.}\\\textbf{Length}}}} 
 & DWA \cite{DWA}  & 1.15 & \textbf{1.05} & 1.17 & 1.22   \\
 & Ego-graph \cite{ego_graph} & 1.06 & 1.19 & 1.10 & 1.01 \\
 & Ego-graph+ & \textbf{1.03} & 1.20 & \textbf{1.01} & 1.04 \\
 & End-to-end Attn-DRL & 1.42 & 1.18 & 0.98 & 1.12 \\
 & TERP without Attention & 1.16 & 1.23 & 1.33 & \textbf{0.91} \\
 & TERP with Attention & 1.39 & 1.21 & 1.24 & 1.55 \\
\hline

\multirow{6}{*}{\rotatebox[origin=c]{0}{\makecell{\textbf{Goal }\\\textbf{Heading Deviation}}}}  & DWA \cite{DWA} & 49.24 & \textbf{10.34} & \textbf{7.19} & 13.18   \\
 & Ego-graph \cite{ego_graph} & \textbf{29.56} & 43.02 & 8.43 & \textbf{12.14} \\
 & Ego-graph+ & 33.26 & 69.23 & 43.19 & 24.16 \\
 & End-to-end Attn-DRL & 83.51 & 65.42 & 71.86 & 46.58 \\
 & TERP without Attention & 45.93 & 67.51 & 75.78 & 23.08 \\
 & TERP with Attention & 74.46 & 53.98 & 89.11 & 161.91 \\
\hline

\end{tabular}
}
\caption{\small{Relative performance of our method versus other methods on various metrics. TERP consistently outperforms other methods in terms of success rate, and CEG even in challenging high-elevation and city-curb environments. The computed stable trajectories possess longer trajectory lengths and higher deviations from the goal depending on the environment. Our quantitative results also highlight the advantages of the included attention modules in our Attn-DRL network.}
}
\label{tab:comparison_table}
\vspace{-15pt}
\end{table}

\subsection{Analysis and Comparison} \label{sec:analysis}
The results are shown in Table \ref{tab:comparison_table}. We observe that all the methods perform well under low and medium elevation conditions in terms of success rate. However, TERP displays significant improvement in successfully reaching goals in highly-elevated environments. This is due to TERP's ability to navigate through the locally least-cost waypoints even under high elevation conditions. Our end-to-end Attn-DRL method tries to plan trajectories along low-cost regions even though the minimum cost cannot be guaranteed.  Hence, the success rate is lower and the CEG is higher than our TERP method. DWA sometimes avoids the high-elevations in front of the robot by treating them as obstacles. However, when the robot reaches an elevated terrain, DWA cannot recognize the elevations to avoid during navigation. In contrast, ego-graph-based methods select the optimal navigation velocities based on the front elevation gradients. These methods cannot maneuver the robot along the minimum-gradient trajectory because they have a discretized action space.


The CEG values in Table \ref{tab:comparison_table} imply that our method attempts to reach the goal by following the least-cost waypoints, which eventually result in minimum elevation gradients along the trajectory. However, this could lead to longer trajectories and higher deviations from the goal direction than other methods. The results in Table \ref{tab:comparison_table} validate that TERP can achieve better success rates while maintaining comparable trajectory lengths.

We observe that navigating in a city environment with curbs higher than the robot's ground clearance is a challenging task for all the methods. Comparatively small elevation gains of the curb regions could possibly lead to erroneous trajectories. However, TERP with attention outperforms all the other methods in terms of success rate (113\% improvement w.r.t ego-graph) by attending to the critical elevation gradients near the curb (see Fig. \ref{fig:Testing-scenarios}b and f). 

\textbf{Ablation Study for the Attention Module:} We compare navigation performance of TERP with and without the attention module. TERP without attention utilizes input elevation maps as the cost-map to find least-cost waypoints. Hence, trajectories generated by this approach select waypoints based on the locally minimum elevations. In contrast, TERP with attention follows the least-cost waypoints by attending to spatially critical elevation changes along the robot's trajectory (see Fig. \ref{fig:Costmap_computataion}c). Hence, the attention module improves the robot's spatial awareness to perform safe navigation tasks (See CEG in Table \ref{tab:comparison_table}, yellow and pink paths in Fig. \ref{fig:Testing-scenarios}).

\textbf{Computational Complexity:}  We compare TERP's execution times with GANav \cite{ganav}, a state-of-the-art semantic segmentation method. TERP is computationally light enough to perform in real-time with execution times between 0.08-0.1 seconds (10-12.5Hz), while GANav takes 0.32-0.39 seconds on the same processing hardware.

\textbf{Decoupled vs. end-to-end:} 
Based on the end-to-end Attn-DRL network's trajectories in simulation (Fig. \ref{fig:Testing-scenarios}a to \ref{fig:Testing-scenarios}d) and in the real-world (Fig. \ref{fig:cover-image} and Fig. \ref{fig:Testing-scenarios}e to \ref{fig:Testing-scenarios}h), we observe a significant degradation in performance. In contrast, TERP exhibits similar performance in both cases. Additionally, TERP also guarantees least-cost dynamic feasible waypoints, while the end-to-end network cannot.  


\section{Conclusions, Limitations and Future Work}
We present a novel formulation to reliably navigate a ground robot in uneven outdoor environments. Our hybrid architecture combines intermediate output from a DRL network with attention with input elevation data to obtain a cost-map to perform local navigation tasks.  We generate locally least-cost waypoints on the obtained cost-map and integrate our approach with an existing DRL method that computes dynamically feasible robot velocities. We validate our approach in simulation and on real-world uneven terrains and compare it with the other navigation techniques on various navigation metrics. 

Our framework has a few limitations which we expect to address in the future. The effective sensing point cloud percentage can be low especially near steep ditches due to the low reflectance of the LiDAR's rays. In such cases, additional depth sensors need to be integrated to achieve better perception. In addition, boundaries between different surfaces may not be sensed by the elevation map. Using a fusion of RGB image data and point clouds could help address this problem.

\bibliographystyle{IEEEtran}
\bibliography{References}

\end{document}